\newtheorem{theorem}{Theorem}
\def\tsc#1{\csdef{#1}{\textsc{\lowercase{#1}}\xspace}}
\begin{document}
\let\WriteBookmarks\relax
\def\floatpagepagefraction{1}
\def\textpagefraction{.001}

\shorttitle{Sight View Constraint for Robust Point Cloud Registration}

\shortauthors{Yaojie Zhang et~al.}

\title [mode = title]{SVC: Sight View Constraint for Robust Point Cloud Registration}                      

\tnotetext[1]{This work was supported by the National Key R\&D Program of China (2023YFB4705002), the National Natural Science Foundation of China (U20A20283), the Guangdong Provincial Key Laboratory of Construction Robotics and Intelligent Construction (2022KSYS013), and the Science and Technology Cooperation Special Project of Hubei Province and the Chinese Academy of Sciences (2023-01-08).}


%
\author[1,2]{Yaojie Zhang}[type=editor,
                        auid=,bioid=,
                        prefix=,
                        role=,
                        orcid=0009-0006-5752-0420]



\ead{yj.zhang1@siat.ac.cn}


\credit{Conceptualization of this study, Methodology, Software, Writing-original draft}


\affiliation[1]{organization={Shenzhen Institute of Advanced Technology, Chinese Academy of Sciences},
    city={Shenzhen},
    postcode={518055}, 
    state={},
    country={China}}
    
\affiliation[2]{organization={University of Chinese Academy of Sciences},
    addressline={}, 
    city={Beijing},
    postcode={100049}, 
    country={China}}

\author[1,2]{Weijun Wang}[style=chinese]
\credit{Investigation, Writing, Supervision}

\author[1,2]{Tianlun Huang}[%
   role=,
   suffix=,
   ]
\credit{Formal analysis, Revising}
\author[1,2]{Zhiyong Wang}[style=chinese]
\credit{Visualization, Investigation}
\credit{Data curation, Writing - Original draft preparation}


\author%
[1,2,3]
{Wei Feng}[type=,
                        auid=,bioid=,
                        prefix=,
                        role=IEEE Senior Member,
                        orcid=0000-0002-9845-999X]
\credit{Investigation, Supervision}
\cormark[1]
\ead{wei.feng@siat.ac.cn}
\ead[URL]{}

\affiliation[3]{{Guangdong Provincial Key Laboratory of Construction Robotics and Intelligent Construction},
    addressline={}, 
    city={Shenzhen},
    postcode={518055}, 
    state={},
    country={China}}

\cortext[cor1]{Corresponding author}



\begin{abstract}
Partial to Partial Point Cloud Registration (partial PCR) remains a challenging task, particularly when dealing with a low overlap rate. In comparison to the full-to-full registration task, we find that the objective of partial PCR is still not well-defined, indicating no metric can reliably identify the true transformation. We identify this as the most fundamental challenge in partial PCR tasks. In this paper, instead of directly seeking the optimal transformation, we propose a novel and general Sight View Constraint (SVC) to conclusively identify incorrect transformations, thereby enhancing the robustness of existing PCR methods. Extensive experiments validate the effectiveness of SVC on both indoor and outdoor scenes. On the challenging 3DLoMatch dataset, our approach increases the registration recall from 78\% to 82\%, achieving the state-of-the-art result. This research also highlights the significance of the decision version problem of partial PCR, which has the potential to provide novel insights into the partial PCR problem.


\end{abstract}



\begin{keywords}
Point Cloud Registration \sep Lidar \sep 3D vision \sep Sight View Constraint \sep  Model Fitting
\end{keywords}

\maketitle

\section{Introduction}
\label{sec:intro}
Point cloud registration (PCR) emerges as a critical and foundational challenge in 3D computer vision. The objective of the PCR task is to determine an optimal six-degree-of-freedom (6-DoF) pose transformation, ensuring precise alignment of input point clouds. 
Using point-to-point feature correspondences is a popular and robust solution to the PCR problem.

Point cloud registration (PCR) can be categorized based on the overlap ratio into (1) full to full, (2) partial to full, and (3) partial to partial registration. Currently, partial to partial registration (partial PCR) remains a challenging issue in the PCR field \cite{huang2021predator}, especially when the overlap rate is low. In the case of correspondence-based PCR methods, this challenge is considered primarily from correspondences with a large number of outliers when the overlap rate is low. Many previous studies have made commendable contributions to address this challenge by proposing distinct descriptors \cite{huang2021predator, zeng20173dmatch, choy2019fully,  qin2022geometric} and stable outlier rejection methods \cite{zhou2016fast, bai2021pointdsc, chen2023sc, zhang20233d}. However, in this paper, we argue that the high outlier rate issue is a significant but not the most fundamental challenge of the partial PCR task. Compared with the full-to-full and partial-to-full registration tasks, we find that partial PCR is still not a well-defined problem when the overlap rate is low, leading to unreliable model selection. We consider this as the primary reason contributing to the challenge of the partial PCR task.

\begin{figure*}[!t]
  \centering
  \includegraphics[width=0.98\linewidth]{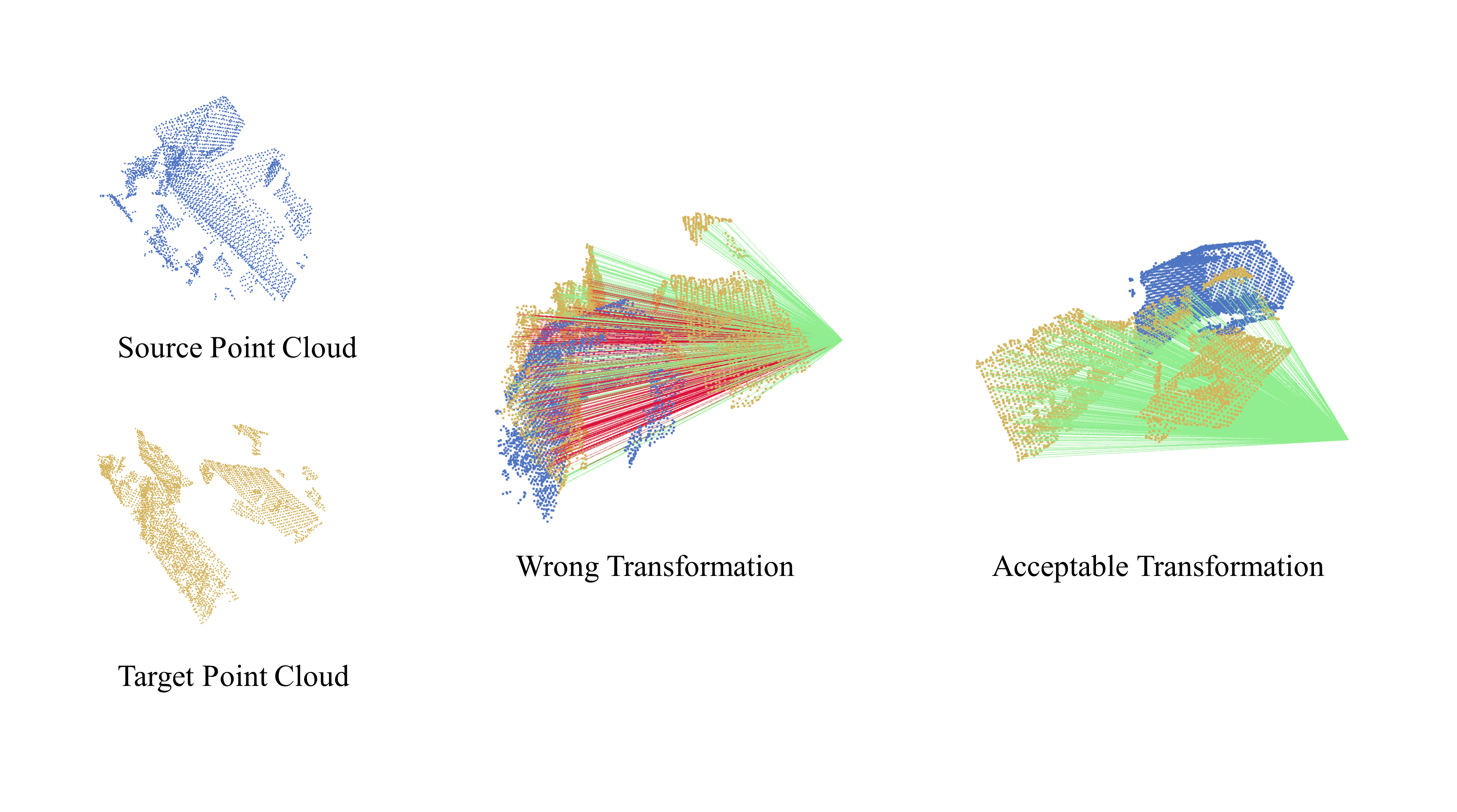}
  \caption{\textbf{An illustration of the core idea of SVC.} The line (green or red) represents the sight view line from the sensor viewpoint to the target point. The red line means there exist source points in the line so that the sight view is blocked. The green line means the sight view is not blocked. If there exist lots of red lines, then the estimated transformation is wrong.}
  \label{fig:core_idea}
\end{figure*}

The "optimal" alignment of partially overlapping point clouds, which is the objective of the partial PCR task, is still not well-defined. For example, given two partially overlapping point clouds and multiple hypotheses of pose transformations that include only one correct transformation, there is no known metric that reliably identifies the correct one as the "optimal" transformation among all hypotheses. 
In the method Sec.\ref{sec:method}, we will elaborate on the challenges of finding such a metric. When the overlap rate is extremely low, even whether such a metric exists remains in doubt. 

Recently, the \cite{xing2024efficient} used viewpoint deviation distance to identify the optimal metric, achieving leading performance. However, the optimal result of this metric still can not guarantee the optimal transformation. In this paper, instead of directly finding the optimal transformation, we utilize the sight view constraint to identify the definitively incorrect transformations. An illustration of our method is given in Fig.\ref{fig:core_idea}. The core idea of our method is simple and rigorous: For a pair of 3D scan point clouds, we obtain the transformed source point cloud using the estimated transformation. Then we can assert the following theorem:

\begin{theorem}
In a static environment, the transformed source point cloud cannot block the line of sight between the target point cloud and the sensor. Otherwise, the estimated transformation is incorrect.
\label{theorem:only_one}
\end{theorem}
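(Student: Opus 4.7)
The plan is to prove the theorem by contradiction, leveraging the physical semantics of a range-sensor scan in a static scene. I would set up notation first: let $P$ denote the source point cloud acquired from sensor position $v_P$, let $Q$ denote the target point cloud acquired from sensor position $v_Q$, and let $T$ be the estimated rigid transformation from source to target frame. The statement to contradict is: ``$T$ is correct, yet there exists a target point $q \in Q$ such that some transformed source point $T(p)$ lies strictly between $v_Q$ and $q$ on the sight-view line.''

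The first substantive step is to formalize the sensor-observation semantics. Because $q \in Q$ was recorded by the range sensor at $v_Q$, the open segment from $v_Q$ to $q$ must have been free of any opaque surface at scan time; otherwise the sensor would have returned the blocking surface's range, not the range to $q$. I would phrase this as a physical postulate (``line-of-sight integrity'' of range scans) and make it explicit that it applies to every recorded point of $Q$. Next I would invoke the static-environment hypothesis: the underlying physical geometry $\mathcal{S}$ in world coordinates is identical during both scans, so the true rigid motion $T^{\star}$ aligns the surface sampled by $P$ exactly with the surface sampled by $Q$, i.e.\ $T^{\star}(P) \subset \mathcal{S}$ and $Q \subset \mathcal{S}$.

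The third step is to close the contradiction. If the estimated $T$ equals the true $T^{\star}$, then every $T(p)$ is a point on the very same physical surface $\mathcal{S}$ that produced $Q$. A blocking $T(p)$ strictly interior to the segment $(v_Q, q)$ would therefore represent an opaque surface point on that segment at target-scan time, which directly violates the line-of-sight integrity established in the previous step. The contradiction forces $T \neq T^{\star}$, i.e.\ the estimated transformation is incorrect. Taking the contrapositive gives exactly the theorem statement.

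The main obstacle I anticipate is not the logical core, which is essentially a one-line physics argument, but making the notion of ``block'' precise enough to survive the discrete, noisy nature of real point clouds. A single sample $T(p)$ lying mathematically on a segment is a measure-zero event; what really matters is whether $T(p)$ sits a nontrivial distance \emph{in front of} the surface patch near $q$ as seen from $v_Q$. I would therefore parameterize the blocking test with a radial tolerance $\tau$ along the sight-view direction, argue that the theorem holds verbatim in the noise-free continuous-surface limit, and then note that for $\tau$ exceeding the sensor and sampling noise the contrapositive remains sound in practice. Handling self-occlusion of the source scan (points of $P$ that were themselves hidden at source acquisition and thus need not correspond to any observable surface in the target view) is a secondary subtlety I would address by restricting the blocking test to sight-view lines of $Q$ rather than symmetrizing it.
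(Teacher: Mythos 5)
Your proposal is correct and follows essentially the same route as the paper: a proof by contradiction in which the line-of-sight integrity of the target scan, combined with the static-environment assumption, rules out any transformed source point lying in front of a recorded target point. Your write-up is in fact more careful than the paper's one-paragraph argument, and the practical caveats you raise (radial tolerance $\tau$ and restricting the test to the target's sight-view lines) correspond exactly to the choices the paper later makes in its blocked-points-count metric.
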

\begin{proof}
It is straightforward to prove this theorem using proof by contradiction. Assuming the estimated transformation is correct, there exist new points in front of the target points, and then these target points are blocked by new points, resulting in a difference between new target point clouds and the existing ones. This contradicts the assumption of a static environment, proving that the transformation is incorrect. 
\end{proof}
By applying this constraint, we can narrow down the range of pose transformation hypotheses, significantly enhancing the registration performance of existing PCR methods.
In summary, our contributions are:
\begin{enumerate}
    \item We propose a strict and general sight view constraint (SVC) that could identify incorrect transformations. The SVC can significantly improve the robustness of existing PCR methods.
    \item By analyzing the objective of the partial PCR task, we underscore the decision version of partial PCR problem as the fundamental challenge. Based on this, we state the regime of PCR methods as (1) generating effective hypotheses including the correct one and (2) identifying the correct transformation readily.
\end{enumerate}

\section{Related Work}
\label{sec:related_work}
The Correspondence-based PCR methods mainly include two steps, initial correspondence generating and model fitting. In this paper, our research is mainly about the model fitting part.

\subsection{Model Fitting}
\subsubsection{RANSAC-based methods}
The model fitting aims to find the best pose transformation for the initial correspondence and plays a key role in PCR methods. The model fitting normally follows the Hypotheses Generation and Selection pipeline starting with the pioneer RANSAC \cite{fischler1981random}. In past decades, many of its variants \cite{chum2005matching,barath2018graph,quan2020compatibility,barath2019magsac} have been proposed to improve time efficiency and robustness performance. One common challenge of the RANSAC and its variants is low inlier ratios. To improve the RANSAC performance, GORE \cite{bustos2017guaranteed} and QGORE \cite{li2023qgore} can be utilized to increase the inlier ratio by rejecting most true outliers. 
\subsubsection{Spatial compatibility methods}
Due to the time complexity of the RANSAC method, various RANSAC-free methods have been explored. The Spatial compatibility is widely applied in point cloud registration. It utilizes correspondence-wise spatial constraints and transfers the outlier rejection problem to a maximum clique problem in graph theory. Clipper \cite{lusk2021clipper} and Teaser \cite{yang2020teaser} introduce a graph-theoretic framework for outliers rejection. SC2 \cite{chen2022sc2} presents a second-order spatial compatibility measure, enhancing the distinctiveness of clustering compared to the original measure. MAC \cite{zhang20233d} relaxes the maximum clique constraint to a maximal clique constraint, enabling the extraction of more local information from a graph. Several deep-learning methods also leverage the spatial compatibility (SC) technique. PointDsc \cite{bai2021pointdsc} incorporates a non-local module guided by the SC for improved performance. DHVR \cite{lee2021deep} generates hypotheses for deep Hough voting using the SC-validated tuples.

\subsection{Metrics for Model fitting}
Model fitting methods typically aim to select the "optimal" model based on the best score according to a specific evaluation metric. Common metrics include inlier count (IC), mean average error (MAE), and mean square error (MSE) \cite{yang2021toward}. FGR \cite{zhou2016fast} uses the scaled Geman-McClure function as the robust penalty to define the metric, taking into account both inlier count and mean average error. SC2++ \cite{chen2023sc} propose a new feature and spatial consistency constrained Truncated Chamfer Distance (FS-TCD) metric, which incorporates feature descriptor information for improvement. The VDIR \cite{xing2024efficient} identifies the optimal transformation with the minimum viewpoint deviation distance. However, as discussed in Sec.\ref{sec:intro}, the sight view is only suitable for identifying incorrect transformation rather than serving as a metric.

For correspondence-based PCR methods, these metrics typically operate on initial correspondences generated by matching feature descriptors. This approach has advantages in terms of achieving global optimality and time efficiency \cite{zhou2016fast}. However, it also makes model fitting methods dependent on the quality of initial correspondences. In this case, multiple deep-learning descriptors, e.g. FCGF \cite{choy2019fully}, predator \cite{huang2021predator}, Geotransformer \cite{10076895}, are designed to improve the registration performance. In other words, the current model fitting procedure can not guarantee the selected "optimal" model is correct for the PCR task. Especially when the overlap is low, the performance of the model fitting methods significantly varies using different descriptors.

Currently, no metric for model-fitting can reliably determine the correct result for low overlapping PCR tasks. Due to the limitations of model fitting performance, greater emphasis is placed on the generation of initial correspondence sets, with model fitting primarily serving as a means of outlier rejection.  Nevertheless, under full-to-full and partial-to-full registration scenarios, the iterative closest point (ICP) based method Go-ICP \cite{yang2015go} can find the global optimal transformation readily using a branch-and-bound scheme. In the following section, we will further analyze the differences between them and propose our method.

\section{Method}
\label{sec:method}
\begin{figure*}[!t]
  \centering
  \includegraphics[width=0.98\linewidth]{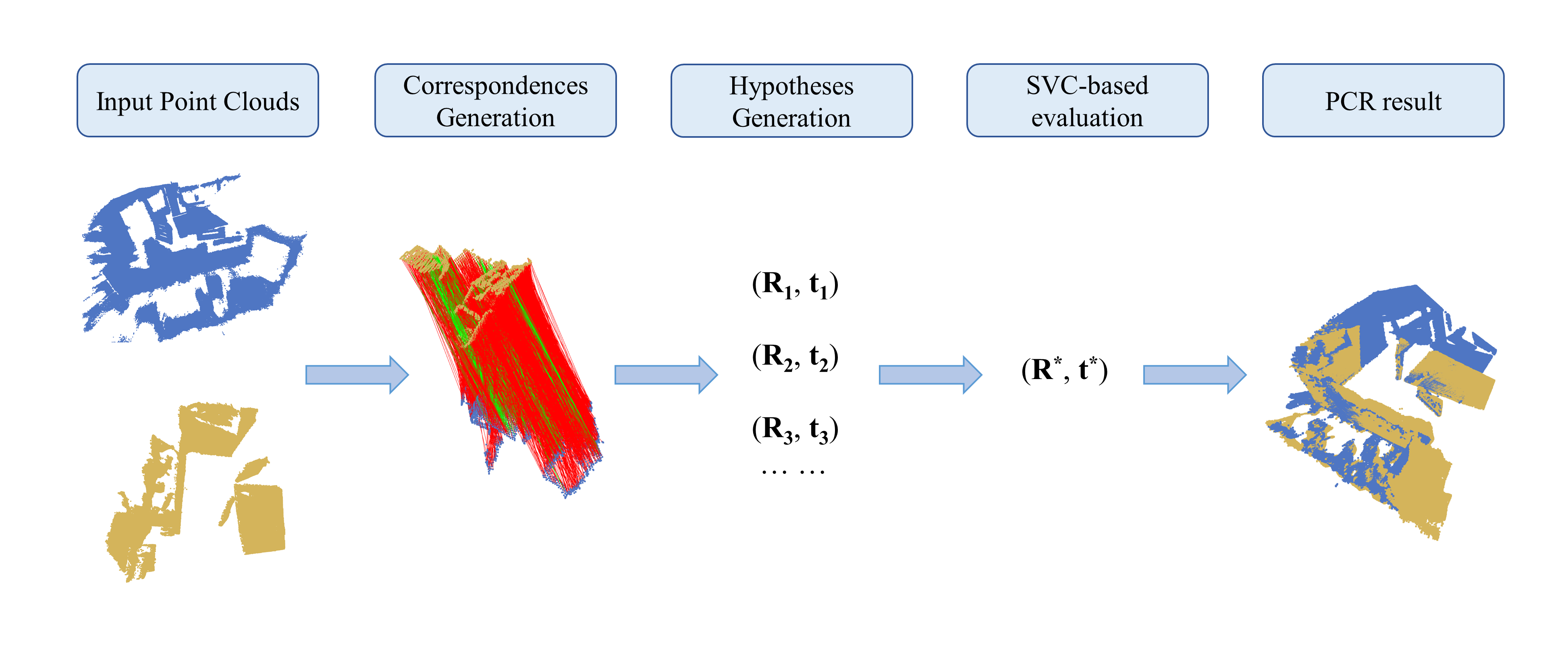}
  \caption{\textbf{A simple pipeline of the correspondence-based PCR method.} (1) Generate initial correspondences according to feature descriptors. (2) Identify inliers and outliers to generate multiple transformation hypotheses. (3) Using SVC-based evaluation to select the optimal transformation. (4) Merge point clouds according to the transformation.}
  \label{fig:overall_pipeline}
\end{figure*}
\subsection{Reminder of the PCR task}
Given two point clouds to be aligned: source point cloud $\mathcal{P} = \{ \boldsymbol{p_i} \in \mathbb{R}^3\ |\ i = 1, ..., M \}$ and target point cloud $\mathcal{Q} = \{ \boldsymbol{q_j} \in \mathbb{R}^3\ |\ j = 1, ..., N \}$. The objective is to recover an optimal 3D rigid transformation with rotation $\mathbf{{R}^{*}} \in \mathcal{SO}(3)$ and translation $\mathbf{{t}^*} \in \mathbb{R}^3$, which minimize the cost with a specific point clouds dissimilarity evaluation metric:

\begin{equation}
\small
d(\mathcal{\hat{P}}, \mathcal{Q}): (\mathcal{\hat{P}}, \mathcal{Q}) \mapsto \mathbb{R},\ where\ \mathcal{\hat{P}} = \{ \boldsymbol{\hat{p_i}} = \mathbf{R} \boldsymbol{p_i}+ \mathbf{t}\ |\ \boldsymbol{p}_i \in \mathcal{P}\}.
  \label{eq:abstract_metric}
\end{equation}

For the full-to-full and the partial-to-full overlap situation, the following Nearest Neighbor (NN) metric can be applied to distinguish the optimal transformation readily \cite{yang2015go}:
\begin{equation}
\small
d_{NN}(\mathcal{\hat{P}}, \mathcal{Q}) =\sum_{i=1}^M ||\boldsymbol{\hat{p_i}} - \boldsymbol{q_i^*}||_{2}^{2} = \sum_{i=1}^M ||\mathbf{R} \boldsymbol{p_i} + \mathbf{t} - \boldsymbol{q_i^*}||_{2}^{2},
  \label{eq:NN_L2_error}
\end{equation}
where $\boldsymbol{q_i^*}$ represents the nearest point of the transformed point $\boldsymbol{\hat{p_i}}$ in $\mathcal{Q}$:

\begin{equation}
\small
\boldsymbol{q_i^*} = NN(\boldsymbol{\hat{p_i}}, \mathcal{Q}) = \mathop{\arg\min}_{\boldsymbol{q} \in \mathcal{Q}}\ ||\boldsymbol{\hat{p_i}} - \boldsymbol{q}||_2.
  \label{eq:nearest_index}
\end{equation}
Under this condition, there is no outlier since all source points have their correct correspondences. Thus the global optimal of Eq.\ref{eq:NN_L2_error} guarantees a correct transformation.

For the partial-to-partial overlap situation, the NN metric is unusable since the overlap region is unclear. Under this condition, the primary objective of the metric is to distinguish the overlap part and non-overlap part also called inliers and outliers. The evaluated overlap rate should be higher than one threshold (e.g. $\eta_1$ = 10\%) so that the PCR task makes sense. Under this constraint, we can have the range of the estimated transformation $(\mathbf{\hat{R}},\mathbf{\hat{t}}) \in Range(\mathcal{P}, \mathcal{Q}, \eta_1)$:

\begin{equation}
\small
Range(\mathcal{P}, \mathcal{Q}, \eta_1) := \{\sum_{i=1}^m [||\mathbf{R} \boldsymbol{p_i} + \mathbf{t} - \boldsymbol{q_i^*}||_{2} < \tau] > \eta_1 M \},
  \label{eq:range_of_RT}
\end{equation}
where the $\tau$ represents the tolerance of inliers, the [$\cdot$] is denoted as the Iverson bracket and returns 1 if the statement is true. Typical metrics for the part overlap PCR including Maximum Inlier Count (IC), MSE, and MAE. Take the Maximum Inlier Count as an example:

\begin{align}
\small
    (\mathbf{\hat{R}},\mathbf{\hat{t}}) &=\mathop{\arg\max}_{(\mathbf{R},\mathbf{t}) \in Range(\mathcal{P}, \mathcal{Q}, \eta_1)} d_{IC}(\mathcal{\hat{P}}, \mathcal{Q})
    \label{eq:IC_metric}\\
    &=\mathop{\arg\max}_{(\mathbf{R},\mathbf{t}) \in Range(\mathcal{P}, \mathcal{Q}, \eta_1)} \sum_{i=1}^m [||\mathbf{R} \boldsymbol{p_i} + \mathbf{t} - \boldsymbol{q_i^*}||_{2} < \tau].
    \label{eq:detailed_IC_metric}
\end{align}
Please note the optimal of this metric can not promise a correct transformation especially when the overlap rate is low. Take the inlier count metric as an example, there may exist outliers that can form a larger overlapping area than inliers. When the overlap rate is over 50\%, the condition is better but still can not guarantee a definite correct result.

\subsection{Sight view Constraint}
\label{sec:svc}
In this subsection, we utilize the sight view constraint (SVC) to figure out impossible transformations to narrow the $Range(\mathcal{P}, \mathcal{Q}, \eta_1)$ to gain a more robust transformation result. The constraint is mainly concerned with whether the non-overlap part of the transformed $\mathcal{\hat{P}}$ will block the sight of the target point cloud. So we first get the non-overlap set $\mathcal{\hat{P}}_{non}$ from $\mathcal{\hat{P}}$ using:
\begin{equation}
\small
\mathcal{\hat{P}}_{non} = 
 \{||\hat{\boldsymbol{p_i}} - \boldsymbol{q_i^*}||_{2} > \tau\ |\ \hat{\boldsymbol{p_i}} \in \mathcal{\hat{P}}, \boldsymbol{q_i^*} = NN(\boldsymbol{\hat{p_i}}, \mathcal{Q})\}
   \label{eq:non_overlap_P}
\end{equation}

The transformed $\mathcal{\hat{P}}$ shared the same Coordinate System with target point cloud $\mathcal{Q}$.
First, we project the 3D coordinates of $\mathcal{\hat{P}}$ and $\mathcal{Q}$ to a unit sphere centered on the Origin of the target point cloud $\mathcal{Q}$:

\begin{equation}
\small
\mathcal{\hat{P}}_{sphere} = \{Proj(\boldsymbol{\hat{p}_i})\ |\ \boldsymbol{\hat{p}_i} \in \mathcal{\hat{P}}_{non} \},\ \mathcal{Q}_{sphere} = \{Proj(\boldsymbol{q_j})\ |\ \boldsymbol{q_j} \in \mathcal{Q} \}
  \label{eq:projection}
\end{equation}
where the function $Proj(\cdot)$ is defined as:
\begin{equation}
\small
Proj(\mathbf{x}): \mathbb{R}^3 \rightarrow \mathbf{S^2}: \mathbf{x} \mapsto \frac{\mathbf{x}}{||\mathbf{x}||_2}.
  \label{eq:projection_function}
\end{equation}
Please note the Origin of the target point cloud represents the sensor position. If the sensor position is not (0,0,0) then we need to perform a translation first. Then we find the nearest neighbor of the target points $Proj(\boldsymbol{q_j}) \in \mathcal{Q}_{sphere}$ in $\mathcal{\hat{P}}_{sphere}$, if the dot product of two points greater than a threshold, then these two points are considered in the same sight. We take the target points that have the same sight correspondence as the region of interest (roi) in the following.

\begin{equation}
\small
\mathcal{Q}_{roi}:=\{\langle Proj(\boldsymbol{q_j}),Proj(\boldsymbol{\hat{p}_j})^*\rangle > T_{threshold})\ |\ \boldsymbol{q_j} \in \mathcal{Q}\}
  \label{eq:Q_roi}
\end{equation}
where
\begin{equation}
\small
Proj(\boldsymbol{\hat{p}_j})^* = NN(Proj(\boldsymbol{q_j}), \mathcal{\hat{P}}_{sphere}).
  \label{eq:NN_of_projection}
\end{equation}

With a bit of notion abuse, we use the $(\boldsymbol{\hat{p}_j})^*$ represents the original points in $\mathcal{\hat{P}}_{non}$ of the $Proj(\boldsymbol{\hat{p}_j})^*$ to make a distinction with  Eq.\ref{eq:nearest_index}.
For the point $\boldsymbol{q_{j}}$ in the target point cloud $\mathcal{Q}_{roi}$, it is considered as blocked if the transformed source point  $(\boldsymbol{\hat{p}_j})^*$ on the same line of sight but closer to the sensor. Then we can compute the following blocked points count (BC) metric:

\begin{equation}
\small
d_{BC}(\mathcal{\hat{P}}, \mathcal{Q}_{roi}) = \sum_{\boldsymbol{q_{j}} \in \mathcal{Q}_{roi}}[||\boldsymbol{q_{j}}||_2 - ||(\boldsymbol{\hat{p}_j})^*||_2 > \tau],
  \label{eq:BC_metric}
\end{equation}
Where the $\tau$ is the same as in Eq.\ref{eq:range_of_RT}.
Please remind the transformed $\mathcal{\hat{P}}$ is depend by $\mathbf{R},\mathbf{t}$, so the $d_{BC}(\mathcal{\hat{P}}, \mathcal{Q}_{roi})$ is also depend by $\mathcal{P}, \mathcal{Q}, \mathbf{R}, \mathbf{t}$. We can use this metric to identify the false transformation if the blocked points count exceeds a certain number. Then we can narrow the $Range(\mathcal{P}, \mathcal{Q}, \eta_1)$ in Eq.\ref{eq:range_of_RT} into following $Range_{svc}(\mathcal{P}, \mathcal{Q}, \eta_1, \eta_2)$:

\begin{equation}
\text{\small $Range_{svc} = \{d_{BC}(\mathcal{\hat{P}}, \mathcal{Q}_{roi})<\eta_2 N\ |\ (\mathbf{{R}},\mathbf{{t}}) \in Range(\mathcal{P}, \mathcal{Q}, \eta_1)\}
$
}
    \label{eq:new_range}
\end{equation}

Currently, the SVC only narrows the range of transformations by rejecting incorrect ones. To select the optimal transformation, we need to combine the SVC with an existing metric. Applying this new range with the IC metric, we can re-write the Eq.\ref{eq:IC_metric} as follows:

\begin{equation}
\small
(\mathbf{\hat{R}},\mathbf{\hat{t}}) = \mathop{\arg\max}_{(\mathbf{R},\mathbf{t}) \in Range_{svc}} d_{IC}(\mathcal{\hat{P}}, \mathcal{Q})
  \label{eq:IC}
\end{equation}
One of the intriguing features of SVC is that it utilizes both non-overlapping and overlapping areas to make a decision. While the other metrics only focus on the overlapping parts or inliers. Please note the SVC is general for PCR tasks, we can simply replace the $d_{IC}(\mathcal{\hat{P}}, \mathcal{Q})$ with other metrics and plug-and-play in PCR methods. We will introduce the implementation of SVC in the subsequent section.


\subsection{The SVC-based algorithm}

The general pipeline for the correspondence-based PCR method is shown in Fig.\ref{fig:overall_pipeline}, and the SVC takes effect at the last hypotheses evaluation procedure. The detailed implementation of SVC-based evaluation and SVC algorithm are shown in Alg.\ref{alg:evaluation} and Alg.\ref{alg:SVC}. For the SVC-based evaluation, we apply a double-check sight view constraint for both the estimated transformation ($\mathcal{P} \rightarrow \mathcal{Q}$) and the inverse transformation ($\mathcal{Q} \rightarrow \mathcal{P}$). This double-check regime is not redundant. As discussed in the Sec.\ref{sec:svc}, the SVC can utilize both overlap and non-overlap information. The double-check regime can fully utilize the non-overlapping part of both $\mathcal{P}$ and $\mathcal{Q}$. Sometimes the transformation is good but the inverse transformation could be bad, and we can still reject this transformation. Further analysis of time efficiency and registration performance will be presented in Sec.\ref{sec:analysis_experiments}.
\begin{algorithm}[htb]
  \caption{SVC-based evaluation Algorithm}
  \label{alg:evaluation}
  \KwIn{Input Point clouds $\mathcal{P}$,$\mathcal{Q}$; \\\ \ \ \ \ \ \ \ \ \  
   Correspondences $\mathcal{C}: \{c_i=\{p_i,q_i\}\}$;\\ \ \ \ \ \ \ \ \ \ \ Transformation Hypotheses $\mathcal{T}:\{T_0, T_1,..., T_K\}$}
  \KwOut{Optimal transformation $T^*$}
  for every $T_i$ in $\mathcal{T}$:\\
  \ \ \ \ compute the score of $d_{IC}(\mathcal{C})$ according to Eq.\ref{eq:detailed_IC_metric} \\
  Arrange $\mathcal{T}$ in descending order based on IC score\\
  $T^*$ = $T_0$ \\
  for every $T_i$ in ordered $\mathcal{T}$:\\
  \ \ \ \ if $SVC(\mathcal{P},\mathcal{Q},T_i)$ and $SVC(\mathcal{Q}, \mathcal{P},inverse(T_i))$ is True:\\
  \tcp{Let $T_i=(\bf{R},\bf{t})$, then $inverse(T_i)=(\bf{R^{T}},\bf{-R^Tt})$}
  \ \ \ \ \ \ \ \ $T^* = T_i$ \\
  \ \ \ \ \ \ \ \ break the loop\\
  return  $T^*$\\
\end{algorithm}

\begin{algorithm}[!h]
  \caption{SVC Algorithm}
  \label{alg:SVC}
  \KwIn{Input Point clouds $\mathcal{P}$,$\mathcal{Q}$; \\\ \ \ \ \ \ \ \ \ \  Estimated Transformation $T$}

  \KwOut{True or False}
  \tcp{Step 1. Gain the non-overlap source point clouds.}
  Get the transformed $\mathcal{\hat{P}}$ using $T$\\
  Get the $\mathcal{\hat{P}}_{non}$ according to Eq.\ref{eq:non_overlap_P}\\
  \tcp{Step 2. Project 3D points to a sphere.}
  Get the $\mathcal{\hat{P}}_{sphere}$ and $\mathcal{{Q}}_{sphere}$ in Eq.\ref{eq:projection} according to Eq.\ref{eq:projection_function}\\
  \tcp{Step 3. Calculate the BC metric.}
  Get the $\mathcal{Q}_{roi}$ according to Eq.\ref{eq:Q_roi}, and store every $Proj(\boldsymbol{\hat{p}_j})^*$ in Eq.\ref{eq:NN_of_projection} \\
  Calculate $d_{BC}(\mathcal{\hat{P}}, \mathcal{Q}_{roi})$ according to Eq.\ref{eq:BC_metric}\\

  if BC score < threshold:\\
  \ \ \ \ return True\\
  else:\\
  \ \ \ \ return False\\
\end{algorithm}





\section{Experiments and Results}
\subsection{Datasets and Experimental Setup}
\subsubsection{Datasets} 
To fully validate the effectiveness of the SVC, we conduct the PCR task on both indoor and outdoor scenes. For the indoor scene, we use 3DMatch (1623 pairs) \cite{zeng20173dmatch} and 3DLoMatch (1781 pairs) \cite{huang2021predator} benchmarks. For the outdoor scene, we also follow \cite{chen2022sc2, zhang20233d} and use the provided benchmark based on KITTI \cite{geiger2012we} dataset. To fairly compare our method with previous works, we use the same FPFH \cite{rusu2009fast} and FCGF \cite{choy2019fully} descriptors as \cite{chen2022sc2, zhang20233d}.

\subsubsection{Evaluation Criteria}
Following previous works \cite{chen2022sc2, zhang20233d}, the primary indicator is
registration recall (RR) under an error threshold. For indoor scenes, the threshold is set to (15 deg, 30 cm). For outdoor scenes is (5 deg, 60 cm). The quantify transformation error is also considered. For a pair of point clouds, we compute the isotropic
rotation error (RE) and L2 translation error (TE) as follows:

\begin{equation}
RE(\mathbf{\hat{R}})=acos (\frac{trace(\mathbf{\hat{R}^T}\mathbf{R^*})-1}{2}),\ TE(\mathbf{\hat{t}})=||\mathbf{\hat{t}}-\mathbf{t^*}||_2.
  \label{eq:error_metric}
\end{equation}
Here $\mathbf{R^*}$ and $\mathbf{t^*}$ denote the ground-truth rotation and translation. 

\subsubsection{Implementation Details}
Theoretically speaking, the SVC can combine with all correspondence-based PCR methods as long as they follow the basic pipeline as shown in Fig.\ref{fig:overall_pipeline}. Considering the registration and real-time performance, we use the SC2 \cite{chen2022sc2} to generate estimated transformations then use SVC-based evaluation to select the best result. For our algorithm, we set the hypotheses number $K=200$, $T_{threshold}=0.99997$, $\eta_2 = 0.02$, $\tau=0.1$ for the indoor scene, and $\tau=0.6$ for outdoor scene. 
All experiments were conducted on an Intel i7-12650H CPU and NVIDIA RTX4060 laptop GPU.

\begin{table*}[!b]
  \raggedright
  \caption{Quantitative Results on 3DMatch \& 3DLoMatch dataset.}
  \label{tab:results_on_3DMatch}
  \begin{tabular}{lccccccccccccc}
    \toprule
    & & \multicolumn{2}{c}{3DMatch FPFH} & \multicolumn{2}{c}{3DMatch FCGF} & \multicolumn{2}{c}{3DLoMatch FPFH} & \multicolumn{2}{c}{3DLoMatch FCGF} & \\
    \cmidrule(lr){3-4} \cmidrule(lr){5-6} \cmidrule(lr){7-8} \cmidrule(lr){9-10}
     & &  & RE(deg) &  & RE(deg) &  & RE(deg) &  & RE(deg) &Time\\
     & & RR(\%) & /TE(cm) & RR(\%) & /TE(cm) & RR(\%) & /TE(cm) & RR(\%) & /TE(cm) & (s)\\
    \midrule
    \multicolumn{10}{l}{\textbf{Deep Learned}} \\
    3DRegNet \cite{pais20203dregnet} & & 26.31 & 3.75/9.60 & 77.76 & 2.74/8.13 & - & -/- & - & -/- & 0.07\\
    DGR \cite{choy2020deep}& & 32.84 & 2.45/7.53 & 88.85 & 2.28/7.02 & 19.88 & 5.07/13.53 & 43.80 & 4.17/10.82 & 1.53\\
    DHVR \cite{lee2021deep}& & 67.10 & 2.78/7.84 & 91.93 & 2.25/7.08 & - & -/- & 54.41 & 4.14/12.56 &3.92\\
    PointDSC \cite{bai2021pointdsc} & & 77.39 & \textbf{2.05/6.43} & 92.85 & 2.05/\textbf{6.50} & 27.74 & 4.11/10.45 & 55.36 & 3.79/{10.37} &0.10 \\
    \midrule
    \multicolumn{10}{l}{\textbf{Traditional}} \\
    SM \cite{leordeanu2005spectral} & & 55.88 & 2.94/8.15 & 86.57 & 2.29/7.07 & 6.06 & 6.19/12.62 & 33.52 & 4.28/11.01 & \textbf{0.03}\\
    RANSAC \cite{fischler1981random} & & 65.29 & 3.52/10.98 & 89.62 & 2.50/7.55 & 15.34 & 6.05/13.74 & 46.38 & 5.00/13.11 & 0.97\\
    GC-RANSAC \cite{barath2018graph}& & 71.97 & 2.43/7.03 & 89.53 & 2.25/6.93 & 17.46 & 4.43/10.75 & 41.83 & 3.90/10.44 & 0.55\\
    TEASER \cite{yang2020teaser}& & 75.79 & 2.43/7.24 & 87.62 & 2.38/7.44 & 25.88 & 4.83/11.71 & 42.22 & 4.65/12.07 & 0.07\\
    FGR \cite{zhou2016fast}& & 40.91 & 4.96/10.25 & 78.93 & 2.90/8.41 & - & -/- & 19.99 & 5.28/12.98 & 0.89\\
    SC2 \cite{chen2022sc2} & & 83.26 & {2.09}/6.66 & 93.16 & 2.09/6.51 & 38.46 & 4.04/{10.32} & 58.62 & 3.79/{10.37} & 0.11\\ 	 			 
    SC2++ \cite{chen2023sc}& & 87.18 & 2.10/{6.64} & 94.15 & 2.04/\textbf{6.50} & 41.27 & \textbf{3.86/10.06}  & 61.15 & {3.72}/10.56 & 0.28\\
    MAC \cite{zhang20233d}& & 83.92 & 2.11/6.79 & 93.72 & \textbf{2.03}/6.53 & 41.27 & 4.06/10.64 & 60.08 & {3.75}/10.60 & 1.87\\
    VDIR \cite{zhang20233d}& & - & -/- & - & -/-& - & -/- &64.66 & \textbf{2.59}/\textbf{9.27} & 0.97\\
    \midrule
    Ours & & \textbf{88.66} & 2.18/6.87 & \textbf{94.58} & 2.07/6.60 & \textbf{45.76} & 4.04/10.62 & \textbf{67.77} & 3.93/10.83 & 0.25\\
    \bottomrule
  \end{tabular}
\end{table*}

\subsection{Results on Indoor Scenes} \label{subsec:indoor_scenes}
We perform extensive comparisons based on the 3DMatch \& 3DLoMatch benchmark. Both deep-learned and geometric-only PCR methods are considered, e.g. 3DRegNet \cite{pais20203dregnet}, DGR \cite{choy2020deep}, DHVR \cite{lee2021deep}, PointDSC \cite{bai2021pointdsc}, SM \cite{leordeanu2005spectral}, RANSAC \cite{fischler1981random}, GC-RANSAC \cite{barath2018graph}, 
TEASER \cite{yang2020teaser}, FGR \cite{zhou2016fast}, SC2 \cite{chen2022sc2}, MAC \cite{zhang20233d}, SC2++ \cite{chen2023sc} and VDIR \cite{xing2024efficient}. To fully test the outlier rejection performance of PCR methods. Since the VDIR \cite{xing2024efficient} only maintains 1279 pairs of the 3DMatch benchmark (1623 pairs). To make a fair comparison, we only compare it under the 3DLoMatch benchmark. The quantitative results are shown in Tab.\ref{tab:results_on_3DMatch}.

\subsubsection{Results on 3DMatch}
As shown in Tab.\ref{tab:results_on_3DMatch}, for the most important criterion registration recall (RR), SVC-based evaluation boosts the performance of SC2 from (83.92\% \& 94.15\%) to (88.66\% \& 94.58\%). This result also slightly outperforms the SC2++. For the transformation error criterion, there is no improvement even worse on the rotation error (RE) and translation error (TE). This is because the SVC only determines whether an estimated transformation is correct, it does not produce new hypotheses. So the transformation error mainly depends on the SC2.

\subsubsection{Results on 3DLoMatch}
The 3DLoMatch is a low overlap rate (10\% to 30\%) version of 3DMatch, which is still challenging in the PCR field. As shown in Tab.\ref{tab:results_on_3DMatch}, it is obvious that the registration performance of all PCR methods is much worse compared with the 3DMatch benchmark. Under this condition, our method greatly improves the registration recall (RR) from (38.46\% \& 58.62\%) to (45.76\% \& 67.77\%), which also greatly outperforms all other methods by at least 4\% \& 3\% improvement.

It is noteworthy that our method exhibits a more substantial improvement on 3DLoMatch in comparison to 3DMatch. Two main facts could contribute to that: (1) Due to the low baseline registration recall, there is ample room for improvement. (2) As discussed in the Methods section, the SVC mainly utilizes the non-overlap part to distinguish the false transformation. Under the low overlap condition, the SVC can better narrow the range of transformation to achieve more robust results.

\begin{table}[!b]
  \centering
  
  \caption{Comparison of Methods on Geotransformer \cite{10076895}}
  \small
  \label{tab:geotransformer_comparison}
  \begin{tabular}{lcccc}
    \toprule
    & \multicolumn{2}{c}{3DMatch} & \multicolumn{2}{c}{3DLoMatch} \\
    \cmidrule(lr){2-3} \cmidrule(lr){4-5}
     & &  RE(deg) & &  RE(deg)\\
    Method & RR(\%) & TE(cm) & RR(\%)  & TE(cm) \\
    \midrule
    LGR & 92.70 & 1.81/6.30 & 75.00 & \textbf{2.94}/9.10 \\
    MAC & 95.70 & -/- & 78.90 & -/- \\
    SC2 & 96.06 & \bf{1.63}/\bf{5.57} & 78.11 & 3.01/8.69 \\
    SC2++ & - & -/- & 78.72 & 2.96/\bf{8.56} \\
    VDIR & - & -/- & 79.50 & -/- \\
    \midrule
    Ours & \bf{97.29} & 1.67/{5.65} & \bf{82.37} & 3.09/8.90 \\
    \bottomrule
  \end{tabular}
\end{table}

\subsubsection{Combined with Geotransformer} Since the SVC is for general PCR transformation selection, it can be easily combined with deep learning based frameworks and take effect. Currently, GeoTransformer \cite{10076895} represents SOTA performance for correspondence learning on 3DLoMatch dataset. We compare the registration results of our method with recent outlier removal methods including MAC \cite{zhang20233d}, SC2 \cite{chen2022sc2}, SC2++ \cite{chen2023sc} and VDIR \cite{xing2024efficient}. The original GeoTransformer uses the LGR \cite{10076895} as the outlier removal method, so we consider it as the baseline. As discussed in Sec.\ref{subsec:indoor_scenes}, we only compare the VDIR on the 3DLoMatch benchmark. As shown in Tab.\ref{tab:geotransformer_comparison}, our method boosts the registration recall (RR) of the SC2 from (96.06\% \& 78.11\%) to (97.29\% \& 82.37\%). Especially for the 3DLoMatch dataset, our method improves the RR criterion by about 3\% over the closest competitors.

\subsection{Results on Outdoor Scenes}
To check the generalization performance of the SVC on outdoor scenes, we also perform experiments on the KITTI dataset. As shown in Tab.\ref{tab:kitti_comparison}, our method achieves competitive registration results on the KITTI dataset. The SC2 already performs very well (around 99\% RR) on the KITTI dataset, so there is little improvement when combined with SVC. For the FCGF condition, our method improves the registration recall from 98.20\% to 98.74\% and slightly reduces the translation error (TE). These improvements indicate that our method produces an effect and is suitable for outdoor scenes. This experiment also indicates that the SVC will not worsen the registration performance which is consistent with the theory in Methods.

\begin{table}[htbp]
  \centering
  \caption{Quantitative Results on KITTI dataset.}
  \small
  \label{tab:kitti_comparison}
  \begin{tabular}{lcccc}
    \toprule
    & \multicolumn{2}{c}{FPFH} & \multicolumn{2}{c}{FCGF} \\
    \cmidrule(lr){2-3} \cmidrule(lr){4-5}
    & &  RE(deg) & &  RE(deg)\\
    Method & RR(\%) & TE(cm) & RR(\%)  & TE(cm) \\
    \midrule
    \multicolumn{5}{l}{\textbf{Deep Learned}} \\
    DGR & 77.12 & 1.64/33.10 & 98.20 & 0.34/21.70 \\
    DHVR & - & -/- & \bf{99.10} & \bf{0.29}/{19.80} \\
    PointDSC & 98.20 & 0.35/8.13 & 98.02 & 0.33/21.03 \\
    \midrule
    \multicolumn{5}{l}{\textbf{Traditional}} \\
    FGR & 5.23 & 0.86/43.84 & 89.54 & 0.46/25.72 \\
    RANSAC & 74.41 & 1.55/30.20 & 98.02 & 0.39/23.17 \\
    SC2 & \bf{99.64} & {0.34}/7.81 & 98.20 & 0.33/20.76 \\
    SC2++ & \bf{99.64} & \bf{0.32}/\bf{7.19} & 98.56 & {0.32}/20.61 \\
    MAC & 99.46 & 0.40/8.46 & 97.84 & 0.34/\bf{19.34} \\ 	 	 	 	 	 
    \midrule
    Ours & \bf{99.64} & {0.34}/{7.77} & {98.74} & 0.33/20.53 \\
    \bottomrule
  \end{tabular}
\end{table}

\subsection{Analysis Experiments}
\label{sec:analysis_experiments}
In this section, we perform a time efficiency and registration performance analysis of the SVC. 

\begin{figure*}[tb]
  \centering
  \subfigure[\label{fig:time-a} Running time vs \# of points.]{\includegraphics[width=0.49\linewidth]{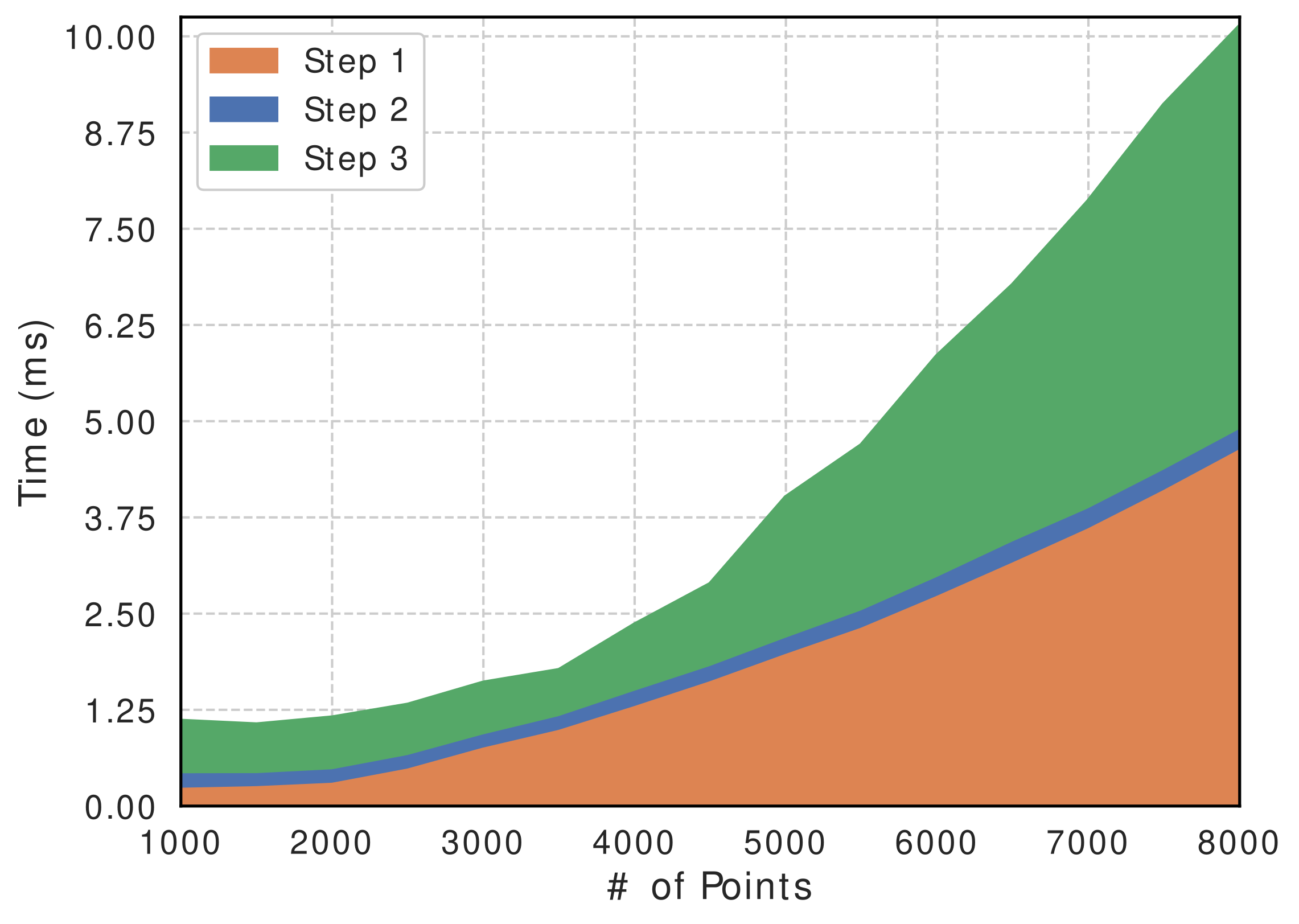}}
  \hfill
  \subfigure[\label{fig:time-b} Average running time vs \# of hypotheses.]{\includegraphics[width=0.48\linewidth]{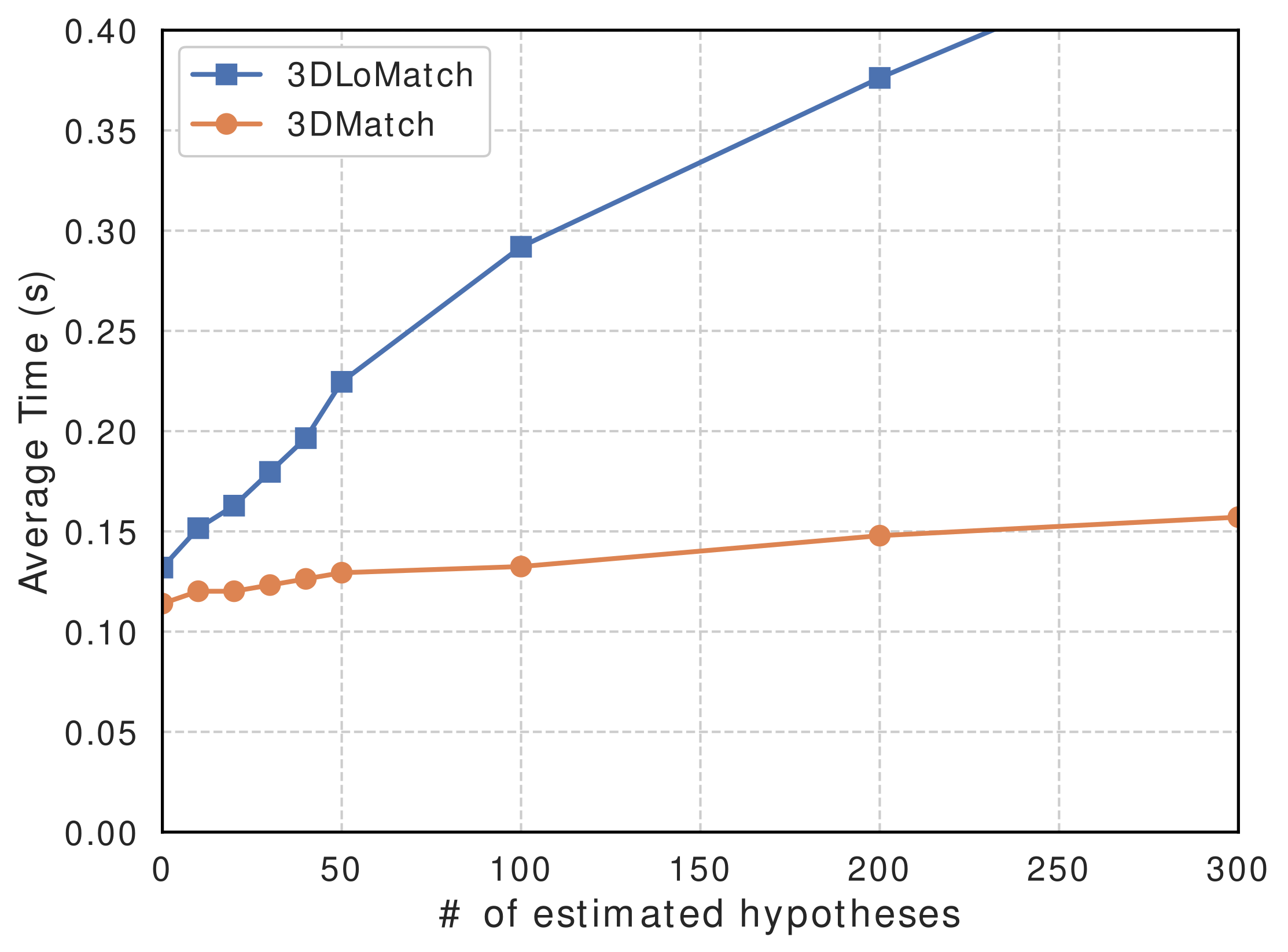}}
    
    
  \caption{\textbf{Time efficiency analysis.} Fig.\ref{fig:time-a} is about one single execution of the SVC algorithm. Fig.\ref{fig:time-b} is about the average time of the SC2 combined with SVC on different datasets.}
  \label{fig:time}
\end{figure*}

\subsubsection{Time efficiency analysis}
As shown in the Fig.\ref{fig:time-a},
We first evaluate the time changing when the number of points rises. The results are shown in Fig.\ref{fig:time-a}, and the following conclusion can be made:
(1) In general, the SVC is very time-efficient. Even with an input of 8000 points, the time consumption of the SVC is about 10ms. (2) It is obvious that Steps 1 and 3 
 occupy a significant amount of time to find the nearest neighbor for each point in point clouds. They both involve multiplying two big matrices like $\bf{A}\bf{B}^\text{T}$, both $\bf{A}$ and $\bf{B}$ are large, dense $N \times 3$ matrix. This large matrix multiplication is the primary bottleneck of our algorithm and the time complexity is $\mathcal{O}(N^2)$. For the CPU-only version, we can use kdtree in PCL library \cite{rusu20113d} to reduce the time complexity to $\mathcal{O}(Nlog(N))$, the results are shown in Tab.\ref{tab:execution_time_on_c++}. It is clear that our algorithm is also time efficient only using the CPU and the growth in time is relatively slow as the number of points increases.
\begin{table}[!b]
    \raggedright
    \caption{Execution Time on CPU only}
    \small
    \begin{tabular}{cccccc}
        \toprule
        \# of points & 2000 & 4000 & 6000 & 8000 & 10000   \\
        \midrule
        Time (ms) & 5.21 & 6.97 & 10.44 & 13.09 & 16.28 \\
        \bottomrule
    \end{tabular}
    \label{tab:execution_time_on_c++}
\end{table}
Since the SVC could run multiple times in actual program execution, so we also evaluate the SC2+SVC actual time consumption on 3DLoMatch \& 3DMatch datasets. The SC2 \cite{chen2022sc2} could generate hundreds even thousands of transformation hypotheses, we assess the runtime variation based on the number of hypotheses that could be processed by the SVC. It represents the original SC2 when the number is 0. According to the results in Fig.\ref{fig:time-b}, the following conclusion can be made:

(1) The time-changing varies between 3DMatch and 3DLoMatch. For 3DMatch, the time increases very slowly. This is mainly due to the SC2 already performing well in this dataset so that the SVC could recognize the reasonable hypothesis in a few iterations for most instances. For the challenging 3DLoMatch datasets, our method needs to check every hypothesis to find the optimal transformation for a large number of instances. (2) In general, the extra time cost of our method is not significant. Even on the 3DLoMatch dataset and running a hundred iterations, the time cost remains within the same order of magnitude. Please note not all hypotheses must be evaluated, for example, the SC2 may generate a large amount of close transformations. If we can make a reasonable selection of these hypotheses, the iterations could reduce significantly without registration performance compromising.

\subsubsection{Registration performance analysis} The number of hypotheses directly affects the registration performance. We perform the registration recall analysis experiment on 3DMatch \& 3DLoMatch datasets combined with FCGF \cite{choy2019fully}. Results are shown in Fig.\ref{fig:RR}, the following conclusions can be made: (1) In general, the registration recall increases along with the \# of hypotheses until convergence. The rate of increase is fast when the \# of hypotheses is under 100. (2) It also implies that our method will not decrease the registration performance when combined with SC2. This is mainly due to our algorithm being strict and conservative. (3) There is a significant difference in growth ranges between the two datasets. For the 3DLoMatch the range is from 58\% to 68\%, while for the 3DMatch is 93\% to 95\%.

\begin{figure*}[hb]
  \centering
  \subfigure[\label{fig:RR-a} 3DLoMatch with FCGF]{\includegraphics[width=0.48\linewidth]{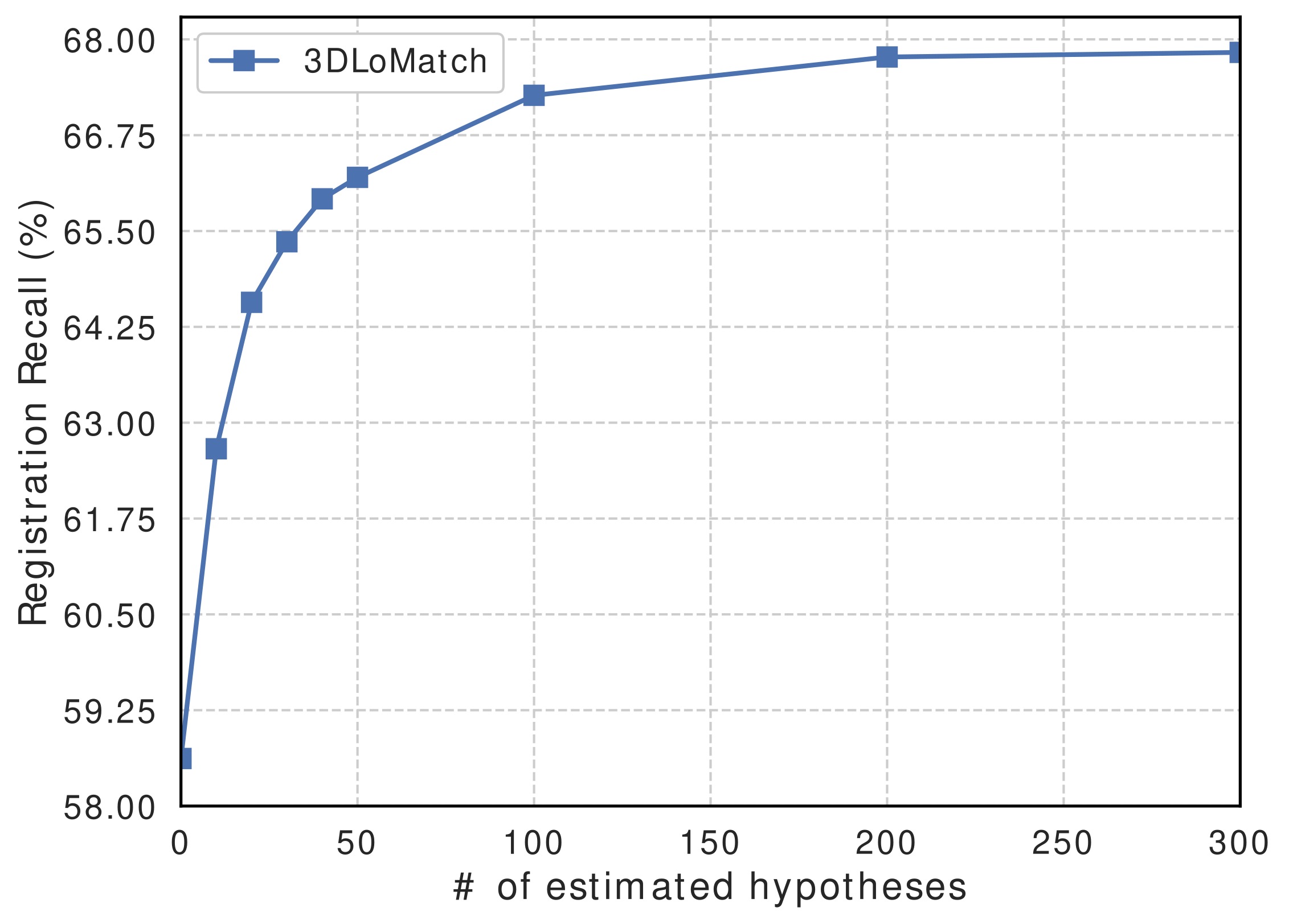}}
  \hfill
   \subfigure[\label{fig:RR-b} 3DMatch with FCGF]{\includegraphics[width=0.48\linewidth]{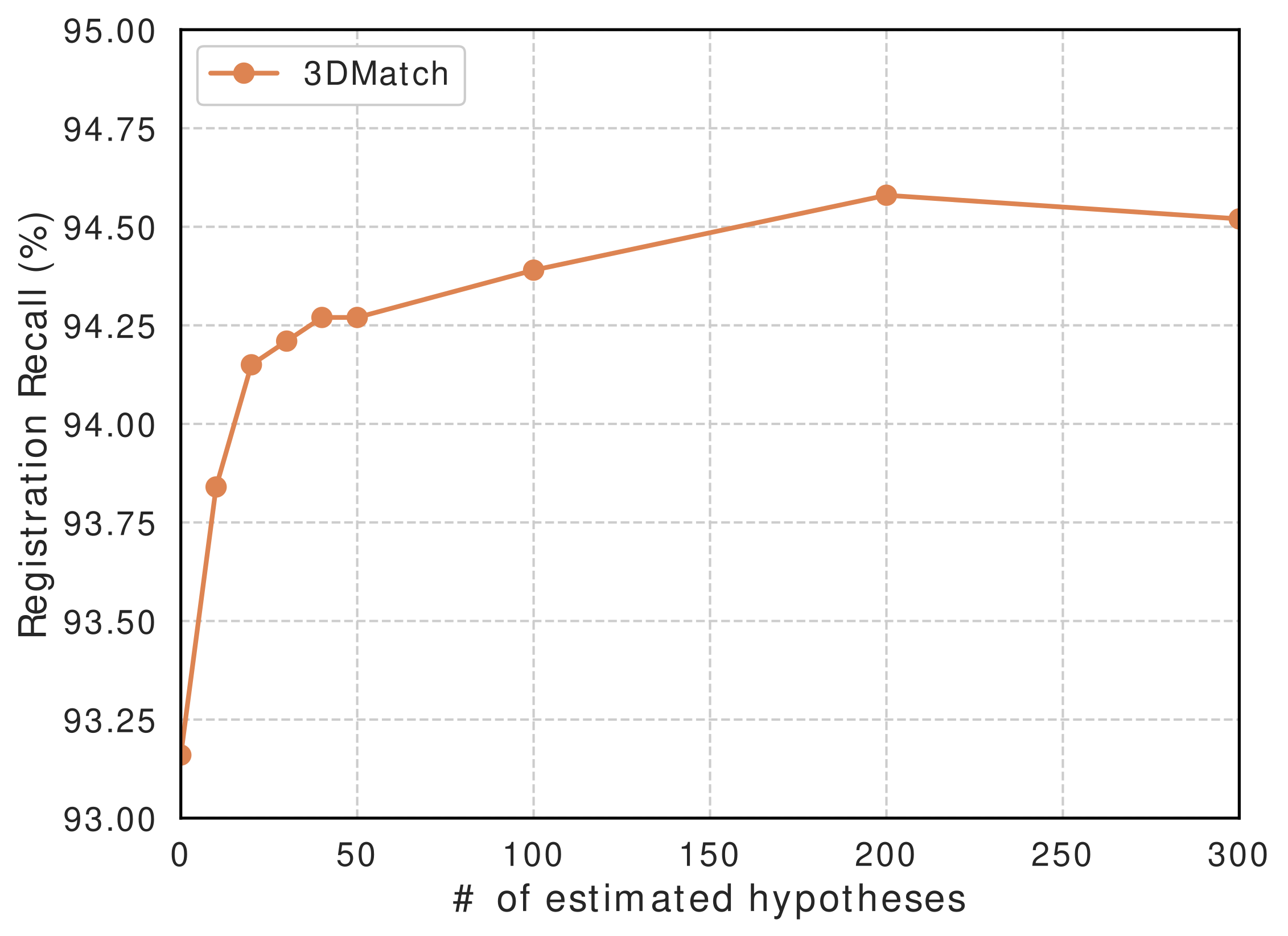}}
    
  
    
  \caption{Registration recall changes with different numbers of estimated hypotheses.}
  \label{fig:RR}
\end{figure*}

\subsection{The decision version of partial PCR problem}
\label{subsec:decision_task}
The decision version of the PCR problem is to determine whether the given transformation for input point clouds is correct. As we discussed in the Method section, the objective of partial to partial PCR is still undefined. This means the decision version of the partial PCR problem is still an open problem. As far as the best we know, there is a lack of research on this task and no related benchmarks.

So we conduct a simple experiment to validate the performance of the SVC on this task. First, we use the SC2 combined with FCGF and FPFH to generate estimated transformations on 3DMatch \& 3DLoMatch datasets, thus this benchmark includes 1623 pairs + 1781 pairs with each pair having two estimated transformations. We use SVC to classify positive and negative transformations and utilize the F1 score to assess the performance of classification models. We set SC2 as the baseline. Its recall is 100\% since it considers all generated transformations as optimal. According to the Tab.\ref{tab:recision_and_recall}, it is obvious that SVC outperforms the baseline by a big margin. Please note this experiment can only qualitatively compare SVC with the baseline since the benchmark we use is insufficient.

Let's further contemplate the relationship between this decision version task and the original PCR task. For any problem that can be reliably solved, we can certainly validate its results reliably. So the decision version task is the basis of the PCR task and needs to be resolved ahead of the PCR task. Assuming this decision version task can be resolved readily, then the following problem is how to generate hypotheses that contain correct transformation. In conclusion, we redefine the PCR regime as (1) effectively generating hypotheses with the correct one and (2) identifying the correct transformation readily.

\begin{table}[!t]
  \raggedright
  \caption{Classification performance on a simple benchmark.}
  \label{tab:recision_and_recall}
  \begin{tabular}{lcccc}
    \toprule
    & \multicolumn{2}{c}{3DLoMatch} & \multicolumn{2}{c}{3DMatch} \\
    \cmidrule(lr){2-3} \cmidrule(lr){4-5}
    &  P/R & F1 score & P/R & F1 score\\
    \midrule
    Baseline & 48.5/100.0 & 65.4 & 88.5/100.0 & 93.9 \\
    SVC & 88.7/88.3 & \bf{88.5} & 98.1/95.7 & \bf{96.9} \\
    \bottomrule
  \end{tabular}
\end{table}

\section{Conclusion}
\label{sec:conclusion}
In this paper, we introduce a novel and general Sight View Constraint (SVC) for robust point cloud registration tasks. The method significantly improves the ability to identify the correct transformation of existing PCR metrics, especially for the low-overlap condition. Extensive experiments show that our method outperforms on multiple datasets. By further analysis of the partial PCR task, we highlight the importance of the decision version of the partial PCR task, which has the potential to provide novel insights into the research problem.

\bibliographystyle{unsrt}
\bibliography{refs,refs1}

\begin{thebibliography}{10}

\bibitem{huang2021predator}
Shengyu Huang, Zan Gojcic, Mikhail Usvyatsov, Andreas Wieser, and Konrad Schindler.
\newblock Predator: Registration of 3d point clouds with low overlap.
\newblock In {\em Proceedings of the IEEE/CVF Conference on computer vision and pattern recognition}, pages 4267--4276, 2021.

\bibitem{zeng20173dmatch}
Andy Zeng, Shuran Song, Matthias Nie{\ss}ner, Matthew Fisher, Jianxiong Xiao, and Thomas Funkhouser.
\newblock 3dmatch: Learning local geometric descriptors from rgb-d reconstructions.
\newblock In {\em Proceedings of the IEEE conference on computer vision and pattern recognition}, pages 1802--1811, 2017.

\bibitem{choy2019fully}
Christopher Choy, Jaesik Park, and Vladlen Koltun.
\newblock Fully convolutional geometric features.
\newblock In {\em Proceedings of the IEEE/CVF international conference on computer vision}, pages 8958--8966, 2019.

\bibitem{qin2022geometric}
Zheng Qin, Hao Yu, Changjian Wang, Yulan Guo, Yuxing Peng, and Kai Xu.
\newblock Geometric transformer for fast and robust point cloud registration.
\newblock In {\em Proceedings of the IEEE/CVF conference on computer vision and pattern recognition}, pages 11143--11152, 2022.

\bibitem{zhou2016fast}
Qian-Yi Zhou, Jaesik Park, and Vladlen Koltun.
\newblock Fast global registration.
\newblock In {\em Computer Vision--ECCV 2016: 14th European Conference, Amsterdam, The Netherlands, October 11-14, 2016, Proceedings, Part II 14}, pages 766--782. Springer, 2016.

\bibitem{bai2021pointdsc}
Xuyang Bai, Zixin Luo, Lei Zhou, Hongkai Chen, Lei Li, Zeyu Hu, Hongbo Fu, and Chiew-Lan Tai.
\newblock Pointdsc: Robust point cloud registration using deep spatial consistency.
\newblock In {\em Proceedings of the IEEE/CVF Conference on Computer Vision and Pattern Recognition}, pages 15859--15869, 2021.

\bibitem{chen2023sc}
Zhi Chen, Kun Sun, Fan Yang, Lin Guo, and Wenbing Tao.
\newblock Sc$^{2}$-pcr++: Rethinking the generation and selection for efficient and robust point cloud registration.
\newblock {\em IEEE Transactions on Pattern Analysis and Machine Intelligence}, 2023.

\bibitem{zhang20233d}
Xiyu Zhang, Jiaqi Yang, Shikun Zhang, and Yanning Zhang.
\newblock 3d registration with maximal cliques.
\newblock In {\em Proceedings of the IEEE/CVF Conference on Computer Vision and Pattern Recognition}, pages 17745--17754, 2023.

\bibitem{xing2024efficient}
Xuejun Xing, Zhengda Lu, Yiqun Wang, and Jun Xiao.
\newblock Efficient single correspondence voting for point cloud registration.
\newblock {\em IEEE Transactions on Image Processing}, 2024.

\bibitem{fischler1981random}
Martin~A Fischler and Robert~C Bolles.
\newblock Random sample consensus: a paradigm for model fitting with applications to image analysis and automated cartography.
\newblock {\em Communications of the ACM}, 24(6):381--395, 1981.

\bibitem{chum2005matching}
Ondrej Chum and Jiri Matas.
\newblock Matching with prosac-progressive sample consensus.
\newblock In {\em 2005 IEEE computer society conference on computer vision and pattern recognition (CVPR'05)}, volume~1, pages 220--226. IEEE, 2005.

\bibitem{barath2018graph}
Daniel Barath and Ji{\v{r}}{\'\i} Matas.
\newblock Graph-cut ransac.
\newblock In {\em Proceedings of the IEEE conference on computer vision and pattern recognition}, pages 6733--6741, 2018.

\bibitem{quan2020compatibility}
Siwen Quan and Jiaqi Yang.
\newblock Compatibility-guided sampling consensus for 3-d point cloud registration.
\newblock {\em IEEE Transactions on Geoscience and Remote Sensing}, 58(10):7380--7392, 2020.

\bibitem{barath2019magsac}
Daniel Barath, Jiri Matas, and Jana Noskova.
\newblock Magsac: marginalizing sample consensus.
\newblock In {\em Proceedings of the IEEE/CVF conference on computer vision and pattern recognition}, pages 10197--10205, 2019.

\bibitem{bustos2017guaranteed}
Alvaro~Parra Bustos and Tat-Jun Chin.
\newblock Guaranteed outlier removal for point cloud registration with correspondences.
\newblock {\em IEEE transactions on pattern analysis and machine intelligence}, 40(12):2868--2882, 2017.

\bibitem{li2023qgore}
Jiayuan Li, Pengcheng Shi, Qingwu Hu, and Yongjun Zhang.
\newblock Qgore: Quadratic-time guaranteed outlier removal for point cloud registration.
\newblock {\em IEEE Transactions on Pattern Analysis and Machine Intelligence}, 2023.

\bibitem{lusk2021clipper}
Parker~C Lusk, Kaveh Fathian, and Jonathan~P How.
\newblock Clipper: A graph-theoretic framework for robust data association.
\newblock In {\em 2021 IEEE International Conference on Robotics and Automation (ICRA)}, pages 13828--13834. IEEE, 2021.

\bibitem{yang2020teaser}
Heng Yang, Jingnan Shi, and Luca Carlone.
\newblock Teaser: Fast and certifiable point cloud registration.
\newblock {\em IEEE Transactions on Robotics}, 37(2):314--333, 2020.

\bibitem{chen2022sc2}
Zhi Chen, Kun Sun, Fan Yang, and Wenbing Tao.
\newblock Sc2-pcr: A second order spatial compatibility for efficient and robust point cloud registration.
\newblock In {\em Proceedings of the IEEE/CVF Conference on Computer Vision and Pattern Recognition}, pages 13221--13231, 2022.

\bibitem{lee2021deep}
Junha Lee, Seungwook Kim, Minsu Cho, and Jaesik Park.
\newblock Deep hough voting for robust global registration.
\newblock In {\em Proceedings of the IEEE/CVF international conference on computer vision}, pages 15994--16003, 2021.

\bibitem{yang2021toward}
Jiaqi Yang, Zhiqiang Huang, Siwen Quan, Qian Zhang, Yanning Zhang, and Zhiguo Cao.
\newblock Toward efficient and robust metrics for ransac hypotheses and 3d rigid registration.
\newblock {\em IEEE Transactions on Circuits and Systems for Video Technology}, 32(2):893--906, 2021.

\bibitem{10076895}
Zheng Qin, Hao Yu, Changjian Wang, Yulan Guo, Yuxing Peng, Slobodan Ilic, Dewen Hu, and Kai Xu.
\newblock Geotransformer: Fast and robust point cloud registration with geometric transformer.
\newblock {\em IEEE Transactions on Pattern Analysis and Machine Intelligence}, 45(8):9806--9821, 2023.

\bibitem{yang2015go}
Jiaolong Yang, Hongdong Li, Dylan Campbell, and Yunde Jia.
\newblock Go-icp: A globally optimal solution to 3d icp point-set registration.
\newblock {\em IEEE transactions on pattern analysis and machine intelligence}, 38(11):2241--2254, 2015.

\bibitem{geiger2012we}
Andreas Geiger, Philip Lenz, and Raquel Urtasun.
\newblock Are we ready for autonomous driving? the kitti vision benchmark suite.
\newblock In {\em 2012 IEEE conference on computer vision and pattern recognition}, pages 3354--3361. IEEE, 2012.

\bibitem{rusu2009fast}
Radu~Bogdan Rusu, Nico Blodow, and Michael Beetz.
\newblock Fast point feature histograms (fpfh) for 3d registration.
\newblock In {\em 2009 IEEE international conference on robotics and automation}, pages 3212--3217. IEEE, 2009.

\bibitem{pais20203dregnet}
G~Dias Pais, Srikumar Ramalingam, Venu~Madhav Govindu, Jacinto~C Nascimento, Rama Chellappa, and Pedro Miraldo.
\newblock 3dregnet: A deep neural network for 3d point registration.
\newblock In {\em Proceedings of the IEEE/CVF conference on computer vision and pattern recognition}, pages 7193--7203, 2020.

\bibitem{choy2020deep}
Christopher Choy, Wei Dong, and Vladlen Koltun.
\newblock Deep global registration.
\newblock In {\em Proceedings of the IEEE/CVF conference on computer vision and pattern recognition}, pages 2514--2523, 2020.

\bibitem{leordeanu2005spectral}
Marius Leordeanu and Martial Hebert.
\newblock A spectral technique for correspondence problems using pairwise constraints.
\newblock In {\em Tenth IEEE International Conference on Computer Vision (ICCV'05) Volume 1}, volume~2, pages 1482--1489. IEEE, 2005.

\bibitem{rusu20113d}
Radu~Bogdan Rusu and Steve Cousins.
\newblock 3d is here: Point cloud library (pcl).
\newblock In {\em 2011 IEEE international conference on robotics and automation}, pages 1--4. IEEE, 2011.

\end{thebibliography}
\end{document}